\let\NAT@parse\undefined
\newtheorem{definition}{Definition}
\newtheorem{proposition}{Proposition}
\title{Multi-Robot Path Deconfliction through Prioritization by Path Prospects}
\author{Wenying Wu$^*$, Subhrajit Bhattacharya$^\dagger$, Amanda Prorok$^*$% <-this % stops a space
\thanks{*Wenying Wu and Amanda Prorok are with the University of Cambridge, UK: {\tt\small \{ww329|asp45\}@cam.ac.uk}. $^\dagger$Subhrajit Bhattacharya is with Lehigh University, USA: {\tt\small sub216@lehigh.edu}}
}
\begin{document}

\maketitle
\thispagestyle{empty}
\pagestyle{empty}

\begin{abstract}
This work deals with the problem of planning conflict-free paths for mobile robots in cluttered environments. Since centralized, coupled planning algorithms are computationally intractable for large numbers of robots, we consider decoupled planning, in which robots plan their paths sequentially in order of priority. Choosing how to prioritize the robots is a key consideration. State-of-the-art prioritization heuristics, however, do not model the coupling between a robot's mobility and its environment.
In this paper, we propose a prioritization rule that can be computed online by each robot independently, and that provides consistent, conflict-free path plans. Our innovation is to formalize a robot's \emph{path prospects} to reach its goal from its current location. 
To this end, we consider the number of homology classes of trajectories, and use this as a prioritization rule in our decentralized path planning algorithm, whenever any robots enter negotiation to deconflict path plans. This prioritization rule guarantees a partial ordering over the robot set. We perform simulations that compare our method to five benchmarks, and show that it reaches the highest success rate (w.r.t. completeness), and that it strikes the best balance between makespan and flowtime objectives.
\end{abstract}

%---------------------------------------------------------

\section{Introduction}

Technological advances are enabling the large-scale deployment of robots to solve various types of problems in logistics and transport, including product delivery~\cite{grippa:2017}, warehousing~\cite{enright:2011}, mobility-on-demand~\cite{pavone2012robotic}, and connected autonomous vehicles~\cite{prorok:2019}. 
Robot teams also hold the promise of delivering robust performance in unstructured or extreme environments~\cite{kantor2003distributed}. 
The commonality of many of these applications is that they require methods that assign and guide individual robots to goal locations on collision-free paths. The challenge of providing fast, optimal and complete solutions to this problem is very current, as we continue to complexify the problem domain by considering increasingly large and heterogeneous robot teams in navigation-constrained, cluttered environments. {In light of these developments, our work focuses on the \emph{coupling} between a robot's mobility traits and the built environment. In particular, we posit that a robot's ability to reach its goal can be measured, and that by integrating this measure in planning routines, better joint path plans can be found.}

%\cite{ma2018searching}

%\fixit{what is the key thing that we provide?. Yet, very little work thus far has proposed scalable solutions to the problem of path planning for large teams of heterogeneous robots.}

%solved for homogeneous but not for heterogeneous
% There is a vast body of work that tackles the problem of multi-robot path planning for \emph{homogeneous} robot teams (e.g.,~\cite{Turpin14,honig2016multi, araki2017multi}).
% However, as we aspire to solve increasingly complex problems, it becomes ever more difficult to embed all necessary capabilities into one single robot type that constitutes a homogeneous team. \emph{Heterogeneous} robot teams hold great potential: multiple heterogeneous robots can complement each other (and their human teammates) by performing complementary task components~\cite{prorok2016impact}, or by taking advantage of differing mobility~\cite{mathew2015optimal}, computational~\cite{marconi2012sherpa}, or sensing~\cite{Tokekar:2013ub} constraints. 
% These heterogeneous robot traits have physical manifestations that lead to various constraints (e.g., morphological, kinematic, and dynamic) that affect the robot's ability to track a given path plan (e.g., see~\cite{honig2017summary}), and hence, algorithms that propose path solutions for homogeneous teams need to be revised.
%Yet, very little work thus far has proposed scalable solutions to the problem of path planning for large teams of heterogeneous robots.

Approaches to multi-robot path planning can generally be described as either centralized (assuming the existence of a central component that knows the state of the whole robot system) or decentralized (where no single component has the full picture, but cooperation must still be achieved). Centralized methods can be further categorized according to whether they are coupled or decoupled. Coupled approaches operate in the joint configuration space of all the robots, allowing for completeness (e.g., see~\cite{honig2016multi,wagner2011m}). However, solving for optimality is NP-hard~\cite{yu:2013}, and although significant progress has been made towards alleviating the computational load, e.g.,~\cite{wagner2011m, sharon2015conflict, ferner2013odrm}, these approaches still scale poorly in environments with a high number of path conflicts. 
On the other hand, decoupled approaches plan for each robot separately, and solve conflicts between paths as they arise, to ensure that collisions with other robots are avoided. Approaches to decoupled planning include sequential programming~\cite{chen:2015}, vehicle prioritization~\cite{vandenberg:2005} and velocity tuning~\cite{kant:1986}. These methods offer improved scalability, but often at the cost of completeness and optimality~\cite{Thrun02}.

\begin{figure}[t]
\centering
\includegraphics[width=0.9\columnwidth]{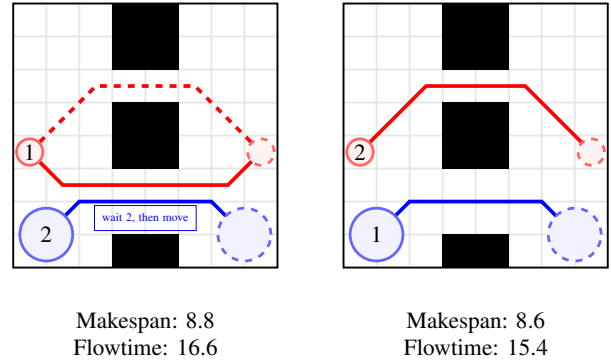}
\caption{An example problem where considering path diversity is important for the prioritization. The red robot has two possible paths, whereas the larger blue robot only has one. On the left, the red robot has first priority. It takes the shorter of its two paths, however this forces the blue robot to wait in place until it can follow. On the right, the blue robot, with lower path diversity, has first priority. The red robot is able to take its alternative path to avoid it, giving a faster overall solution. \label{fig:motivation}}
\end{figure}

Prioritized planning, first proposed in~\cite{Erdmann87} as a centralized strategy, is a very efficient method because it allows robots to plan sequentially in space-time in order of priority, eschewing the combinatorial complexities of coupled approaches. 
In this approach, each robot plans a minimum-cost path to its goal that avoids the computed trajectories of any higher-priority robots. Clearly, the chosen priority order will affect the solution found. It is generally desirable to optimize metrics such as makespan (the time at which the last robot in the team arrives), flowtime (the sum of all robots' travel times), or success rate (completion); targeting the optimization of either one of these objectives (but commonly not all simultaneously), researchers have proposed heuristics for choosing a planning order \cite{Turpin14,velagapudi2010decentralized,Thrun02, vandenberg:2005}.

The original prioritized planning idea in~\cite{Erdmann87} used a \textit{fixed} total priority ordering, and has been adapted in various papers to work in a decentralized manner (e.g.,~\cite{cap2013asynchronous,velagapudi2010decentralized}). However, choosing how to prioritize the team of robots still remains a key consideration. 
Moreover, as the operational conditions of the robots vary throughout time, and environments are in general not static, it is crucial to consider \emph{online} (dynamic) priority schemes. Although several dynamic priority schemes have been considered thus far (e.g., {\cite{azarm1997conflict, clark2002applying, deshpandereview, regele2006cooperative}}), none of these schemes account for the coupling between robot mobility and the environment, and hence, may fail to find better solutions. Such a scenario is exemplified in Fig.~\ref{fig:motivation}, which shows how considering a robot's path diversity leads to a reduction of both flowtime and makespan.

%In this work, we propose a method that addresses the path planning problem for teams of heterogeneous robots. 
%In this work, we focus on how a robot's \emph{effective} size~\footnote{We note that an \emph{effective} robot size is commonly used as a simplified proxy for constrained kinematic models. For example, see~\cite{alonso2013optimal}, which uses a method that inflates the body of a non-holonomic robot so that holonomic motion models can be applied.} determines the paths available to it through an environment, and how this in turn affects its ability to plan around other robots. 

In this work, we focus on how the coupling between the built environment and a robot's mobility traits determines its path options to reach its goal. In specific, we propose a decentralized planning method that makes use of a novel prioritization rule based on an estimate of the robot's \emph{path prospects}. %, i.e., an estimate of the number of path options a robot has to reach its goal. 
The key idea that underpins this method is that individual robots have distinct path prospects within the same environment, due to unique conditions arising from kinematic, dynamic, or environmental constraints. The purpose of this work is to provide a formal introduction to the concept of path prospects, and a demonstration of its utility in multi-robot path planning.
%The formalization of our path prospect model leans on methods from algebraic topology, through which we compute homology classes of trajectories from a robot's current position to its goal within a known environment.  We show how our prioritization rule is embedded in a decentralized, dynamic planning algorithm to de-conflict robot trajectories.
%
\section{Related Work}

Several papers have proposed prioritization heuristics and decentralized adaptations since prioritized path planning was first proposed in \cite{Erdmann87}. 

%We can categorize the proposed works into \emph{static} and \emph{dynamic} approaches.
%Static prioritized approaches make use of a prioritization heuristic that does not vary as a function of time, and all planning typically occurs before any motion takes place. 

In ~\cite{vandenberg:2005}, Van Den Berg et al. propose a static heuristic based on the length of the path from a robot's start to its goal when ignoring the presence of other robots. Robots with a smaller path length are given lower priority; the intuition is that they can afford to spend more time planning around other robots without impacting the makespan of the overall solution. 
In a similar approach, the work in~\cite{velagapudi2010decentralized} uses a robot's planning time (rather than path length) to determine static priorities.
This approach is generalized in~\cite{cap2013asynchronous} to account for asynchronous communications. Although the aforementioned approaches are based on decentralized computational models, they use static prioritization methods and rely on global knowledge.
%rely on the assumption that a robot is able to communicate with (and receive path information from) all other robots in the system, which is unfeasible in large environments (since that would require robots to have unconstrained communication ranges).
%
%The work in~\cite{Thrun02} focuses on the problem of maximizing algorithm completeness. The authors propose a static heuristic which looks at whether a robot needs to navigate over another robot's start location or goal location, e.g., if robot A's path moves over robot B's start, then let B have higher priority. Based on this, they only consider priority orderings that satisfy certain constraints. The disadvantage of this approach is that these constraints can end up being circular (and hence completeness cannot be guaranteed), and furthermore, it requires global knowledge of all robots' start and goal locations. %and therefore cannot be easily applied in a distributed context.

%In an online approach, where priorities change dynamically, robots must cycle between movement and planning. The following algorithms implement this, and assume that other robots are treated as unexpected obstacles detected via sensors.

The algorithm proposed in \cite{azarm1997conflict} considers an \emph{online} prioritization method, whereby robots with conflicting paths consider all possible priority orders, and choose the best one. Similarly, the work in~\cite{ma2018searching} evaluates space of all possible priority orderings in a conflict-driven combinatorial search framework.
However, this style of exhaustive negotiation does not scale to conflicts beyond a small number of robots, since for $N$ robots there exist $N!$ different priority orders. 

Regele et al.~\cite{regele2006cooperative} define a method whereby a robot can raise its own priority if it detects that it will become blocked by another robot. The disadvantage is that the first robot to raise its priority and demand right of way will usually obtain it; the paper states that it is difficult to predict which solution will be chosen by the algorithm in a given situation since just a small time difference in the execution of a robot's plan can result in a completely different solution.
The algorithm proposed in \cite{clark2002applying} (and also its extension in~\cite{deshpandereview}) has every robot maintain a list of the robots currently within its field of view. 
Replanning is triggered whenever a new robot comes into range, and a robot avoids the paths of higher priority robots in its list when planning. 
%The algorithm can be used either with a static priority scheme, where each robot is pre-assigned a priority number, or a dynamic priority scheme, where a robot's priority is dynamically computed. 
%
The authors use a dynamic heuristic based on the current local workspace; they define a function which counts the number of environmental obstacles within some range of the robot, and allow robots whose workspace is more crowded to have higher priority. Although this work has some similarity to our approach in that it considers how environmental clutter might hinder a robot's ability to reach its goal efficiently, it fails to truly model the paths available to a robot, given the robot's specific mobility traits and its motion constraints in the surrounding environment.
%(breaking ties using robots' integer identifiers).
%The work in~\cite{deshpandereview} extends this approach to allow multi-hop communication. %Deshpande et al.\ additionally extend the local workspace heuristic to count the number of robots as well as obstacles.

\textbf{Contributions.} Overall, none of these existing approaches use heuristics or schemes that explicitly account for the coupling between a robot's mobility and the environment. This work focuses on decentralized multi-robot path planning. Within this context, our main contribution is a novel prioritization heuristic, based on the number of robots' \emph{path prospects}, that implicitly takes into account the coupling between environment and mobility traits. 
To this end, we develop a prioritization rule that has two key components: \textbf{(1)} a method that estimates the \emph{number of path options} a robot has to reach its goal, based on theory from algebraic topology, and \textbf{(2)}, a method that defines the \emph{area of relevance}, within which these path options are computed. 
We show how our prioritization rule is embedded in a decentralized, dynamic planning algorithm to de-conflict robot trajectories. 
%We show that we can embed this prioritization rule into a decentralized planning algorithm, such that a partial ordering over the robot set is guaranteed.
We prove that this dynamic planning algorithm provides a partial ordering over the robot set, and hence, is cycle-free. Our results demonstrate that the planned solutions provide very competitive makespan and flowtime performance; moreover, they provide the  best trade-off between these two conflicting objectives.

% and \textit{(ii)}, a method that defines the area within which these path options are computed. 
%\fixit{We show that our prioritization is consistent, and leads to conflict-free plans.}

% Overall, we believe that this computation of path prospects represents a meaningful estimate of path options that a robot has to reach its goal. 
%
\section{Problem Description}

We consider a $D-$dimensional workspace $\mathcal{W} \subseteq \mathbb{R}^D$ and a set of $B$ static obstacles $\mathcal{O} = \{o_1,\dots,o_B\}$ with $o_i \subset \mathcal{W}$. A team of $N$ robots $\mathcal{R} = \{r_1, \ldots, r_N\}$ navigate in this shared workspace.
%in unoccupied space $\mathcal{U}$ such that $\mathcal{U} = \mathcal{W} \setminus \mathcal{O}$.
The robot team is heterogeneous in size; the effective space occupied by robot $r_n$ is referred to by $\rho_n$. 
%and each robot belongs to a species $k \in \{1,\ldots,M\}$ 

\textit{Graph representation.} Each robot travels along the edges of a directed graph $G_n = \langle \mathcal{V}_n, \mathcal{E}_n \rangle$, which allows only feasible motion and accounts for all constraints (morphological, kinematic, dynamic). In particular, a robot $r_n$ that travels along edges in $G_n$ cannot collide with any obstacles in $\mathcal{O}$.
The set $\mathcal{V}_n$ is defined by vertices $v_i = \langle \mathbf{x}_i, t_i \rangle$ with $\mathbf{x}_i \in \mathcal{W}$ and $t_i \in \mathbb{R}^+$. The set $\mathcal{E}_n$ is defined by directed edges $e_{ij}: \mathbb{R}^+ \mapsto \mathbb{R}^D$, between vertex $v_i$ and $v_j$, such that $e_{ij}(t_i) = \mathbf{x}_i$, $e_{ij}(t_j) = \mathbf{x}_j$, and $t_i \leq t_j$. In other words, the graph $G_n$ exists in a $(D+1)$-dimensional space, where the last dimension represents time.

\textit{Labeled assignment}. Robot $r_n$ is assigned a start location $\mathbf{s}_n \in \mathcal{W}$ (corresponding to vertex $v_i$ with $\mathbf{x}_i = \mathbf{s}_n$ and $t_i=0$).
Similarly, robot $r_n$ is assigned a goal location $\mathbf{g}_n \in \mathcal{W}$ (corresponding to a set of vertices $v_i$ with $\mathbf{x}_i = \mathbf{g}_n$ and $t_i \in \mathbb{R}^+$). A labeled assignment $\mathcal{A}$ is a set of tuples $\{\langle \mathbf{s}_1, \mathbf{g}_1 \rangle, \ldots, \langle \mathbf{s}_N, \mathbf{g}_N \rangle\}$, for all robots in $\mathcal{R}$. 

%We define the notion of a \emph{problem}, which is a tuple constituted by the workspace, the obstacles, the robot group, and the assignment, and is written as $\mathcal{P} = \langle \mathcal{W}, \mathcal{O}, \mathcal{R}, \mathcal{A} \rangle$.

\textit{Conflict-free trajectories}. A robot $r_n$ has a trajectory $\pi_n: \mathbb{R}^+ \mapsto \mathcal{W}$ that represents a sequence of edges traversed in $G_n$ such that two consecutive edges share a common vertex. 
A trajectory $\pi_n$ is said to be \emph{satisfying} if $\pi_n(0) = \mathbf{s}_n$ and there exists a time $t_n^f$ such that $\pi_n(t_n^f) = \mathbf{g}_n$.
A robot $r_n$ navigating along this path defines a volume $V(\pi_n, \rho_n)$ that depends on its size. To coordinate the navigation in $\mathcal{W}$, two robots $r_n$ and $r_m$ can share their path plans with each other if they are within communication range, i.e., if their positions are separated by a quantity less than $c$.
We make use of a function $\textsc{Trim} (G_n, \rho_n, V(\pi_m, \rho_m))$ that removes all unfeasible paths in $G_n$ that would collide with the volume defined by robot $r_m$. Any path in the graph returned by $\textsc{Trim}$ is ensured to be \emph{conflict-free} with the path $\pi_m$ of robot $r_m$.

In order to facilitate the definition of a given robot's configuration space, we define the notion of an \emph{effective obstacle}, which is a set of original obstacles in $\mathcal{O}$, such that no trajectories in a given graph passes between them (see Figure~\ref{fig:prioritization}). Specifically, a robot $r_n$ has a set of effective obstacles $\mathcal{\tilde{O}}_n = \{\tilde{o}_1,\ldots,\tilde{o}_{\tilde{B}}\}$, $\tilde{B}\leq B$, with $\tilde{o}_i \subseteq \mathcal{O}$ and $\cup_i \tilde{o}_i = \mathcal{O}$ and $\cap_i \tilde{o}_i = \emptyset$.

Figure~\ref{fig:graph_plane} shows a labeled assignment for two robots that must plan minimum-cost trajectories from their start positions to their goal positions. Figure~\ref{fig:graph_volume} demonstrates how robot $r_2$ circumnavigates the path plan of robot $r_1$, after execution of $\textsc{Trim} (G_2, \rho_2, V(\pi_1, \rho_1))$.

\textit{Assumptions}.
We assume that a robot is able to check for collisions between its own planned path and another robot's. To facilitate this, we assume all their clocks are synchronized. Messaging delay can be accommodated, however, it must be negligible with respect to robot dynamics (i.e., the speed at which the motion graph is traversed). We assume that robot detections are always mutual (when they come into communication range).

\textit{Objective}. Our goal is to find a method that strikes the best balance between minimizing the mean flowtime ($\sum_n t_n^f/N$) and minimizing the makespan ($\max_n t_n^f$), such that each robot $r_n$ follows a satisfying trajectory $\pi_n$ which is conflict-free with all other robots' paths. 
We note that, in general, these objectives demonstrate a pairwise Pareto optimal structure, and cannot be simultaneously optimized~\cite{yu:2013}.

% conflicting nature of problem: Longest job has to be scheduled on fastest resource to minimize the makespan and for minimizing flow time, shortest job to be scheduled on fastest resources.

\begin{figure}[tb]
\centering
\psfrag{a}[cc][][0.95]{$\mathbf{s}_1$}
\psfrag{e}[cc][][0.95]{$\mathbf{g}_1$}
\psfrag{i}[cc][][0.95]{$\mathbf{g}_2$}
\psfrag{o}[cc][][0.95]{$\mathbf{s}_2$}
\psfrag{x}[cc][][1]{$x$}
\psfrag{y}[cc][][1]{$y$}
\includegraphics[width=0.8\columnwidth]{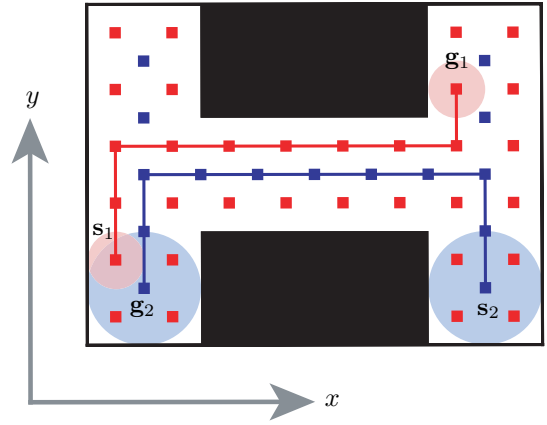}
\caption{Planar workspace with two robots, $r_1$ and $r_2$, and their respective start and goal positions. Robot $r_2$ has an effective size $\rho_2$ that is twice that of robot $r_1$. The minimum-cost paths would result in a collision. 
\label{fig:graph_plane}}
\end{figure}

\begin{figure}[tb]
\centering
\psfrag{x}[cc][][1]{$x$}
\psfrag{y}[cc][][1]{$y$}
\psfrag{t}[cc][][0.95]{$t$}
\psfrag{v}[lc][][0.95]{$V(\pi_1, \rho_1)$}
\psfrag{u}[lc][][0.95]{$V(\pi_2, \rho_2)$}
\psfrag{p}[rc][][0.95]{$\pi_1$}
\psfrag{e}[cc][][0.95]{$\mathbf{g}_1$}
\psfrag{i}[cc][][0.95]{$\mathbf{g}_2$}
\includegraphics[width=0.8\columnwidth]{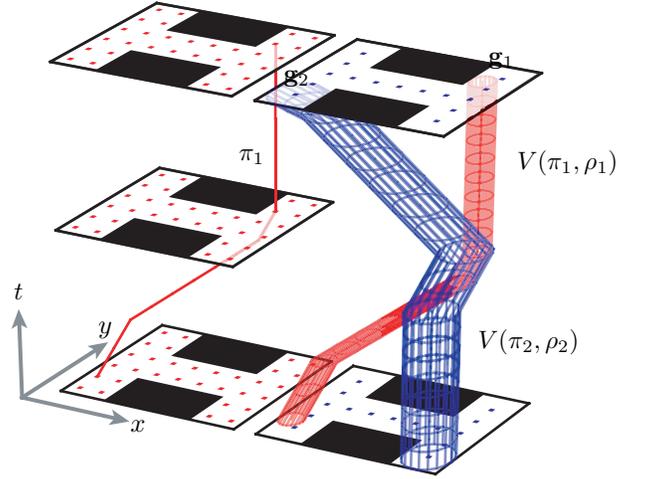}
\caption{On the left, we plot the space-time graph $G_1$ with a minimum-cost trajectory $\pi_1$ for robot $r_1$. On the right, we see how trajectory $\pi_2$ sweeps a volume $V(\pi_2, \rho_2)$ that does not intersect with $V(\pi_1, \rho_1)$.  \label{fig:graph_volume}}
\end{figure}

\section{Decentralized Coordination}

Our decentralized path planning algorithm can be broken down into two levels: at the higher level (i.e., coordinated planning), we consider how robots communicate and negotiate a priority ordering; at the lower level (i.e., individual planning), we consider how an individual robot (re)plans a trajectory to its goal given its current knowledge about the environment and the plans of other robots within communication range. We make use of the following definitions.

%%%%%%%%
\begin{algorithm}[tb]
\caption{Dynamic Prioritized Path Planning \label{alg:dynamic}}
\DontPrintSemicolon
$H_n \gets \emptyset$ // list of higher priority robots \;
$L_n \gets \emptyset$ // list of lower priority robots \;
$\pi_n \gets \textsc{ComputeNewPlan}(H_n)$ \;
\While{\textsc{True}} {
    \If{new robot $r_m$ (with priority $\xi_m$) in range $c$ \emph{\bf or} received new priority $\xi_m$ from $r_m$} {
        $\xi_n \gets \textsc{ComputePriority}(r_n)$\;
        %$\xi_m \gets \textsc{ReceivePriorityFrom}(r_m)$\;
        $H_{n,old} \gets H_n$ and  $L_{n,old} \gets L_n$\;
        $H_{n} \gets \{r_i | \xi_i \prec \xi_n \wedge r_i \in H_{n,old} \cup L_{n,old} \cup \{r_m\} \} $\;
        $L_{n} \gets \{r_i | \xi_i \succ \xi_n \wedge r_i \in H_{n,old} \cup L_{n,old} \cup \{r_m\} \} $\;
        \If{$H_{n,old} \neq H_n$}{
            $\pi_n \gets \textsc{ComputeNewPlan}(H_n)$\;
        }
        % \If{$\xi_m \prec \xi_n$}{
        %     $H_n \gets H_n \cup r_m$\;
        %     %$\textsc{BroadcastPlan}(\pi_n)$\;
        % }
        % \ElseIf{$\xi_m \succ \xi_n$}{
        %     $L_n \gets L_n \cup r_m$\;
        % }
        % \If{$H_{n,old} \neq H_n$}{
        %     $\pi_n \gets \textsc{ComputeNewPlan}(H_n)$\;
        % }
    }
    % \ElseIf{receive new priority $\xi_m$ from $r_m$} {
    %     %// check if $r_m$ needs to be moved between $H_n$ and $L_n$
    %     \If{$\xi_m \prec \xi_n$ \AND $r_m \in L_n$} {
    %         %// move $r_m$ from $L_n$ to $H_n$
    %         $H_n \gets H_n \cup r_m$\;
    %         $L_n \gets L_n \setminus r_m$\;
    %         %$\pi_n \gets \textsc{ComputeNewPlan}(H_n)$\;
    %         % $\textsc{BroadcastPlan}(\pi_n)$
    %     }
    %     \ElseIf{$\xi_m \succ \xi_n$ \AND $r_m \in H_n$ }{
    %         %// move $r_m$ from $H_n$ to $L_n$
    %         $L_n \gets L_n \cup r_m$\;
    %         $H_n \gets H_n \setminus r_m$\;
    %         %$\pi_n \gets \textsc{ComputeNewPlan}(H_n)$\;
    %     }
    %     $\pi_n \gets \textsc{ComputeNewPlan}(H_n)$\;
    % }
    \ElseIf{$r_m$ left range $c$} {
        $L_n \gets L_n \setminus r_m$\;
        \If{$r_m \in H_n$}{
            $H_n \gets H_n \setminus r_m$\;
            $\pi_n \gets \textsc{ComputeNewPlan}(H_n)$\;
        }
    }  
    \ElseIf{receive new plan $\pi_m$ from $r_m$}{
        \If{$r_m \in H_m$}{
            $\pi_n \gets \textsc{ComputeNewPlan}(H_n)$\;
            %$\textsc{BroadcastPlan}(\pi_n)$\;
        }
    }    
}
\end{algorithm}

\begin{algorithm}[tb]
\caption{Re-plan trajectory \label{alg:function_plan}}
\DontPrintSemicolon
\SetKwInput{fun}{Function}
\fun{\textsc{ComputeNewPlan}$(H_n)$}
$G \gets G_n$ \;
\For{$r_m \in H_n$} {
    $\textsc{ReceivePlanFrom}(r_m)$\;
    $G \gets \textsc{Trim}(G, \rho_n, V(\pi_m, \rho_m))$\;
}
$\pi_n \gets$ \textsc{PlanPathFromCurrentPosition}$(G, \mathbf{g}_n)$\;
$\xi_n \gets \textsc{ComputePriority}(r_n)$\;
$\textsc{BroadcastPlanIfChanged}(\pi_n)$\;
$\textsc{BroadcastPriorityIfChanged}(\xi_n)$\;
\Return $\pi_n$
\end{algorithm}

%%%%%%%%

\begin{definition}[Priority ordering]
A priority ordering $\prec$ is such that a robot $r_n \in \mathcal{R}$ with priority $\xi_n$ is of higher priority than robot $r_m$ with priority $\xi_m$ iff $\xi_n \prec \xi_m$. 
\end{definition}

\begin{definition}[Ordered robot set]
Given a priority ordering $\prec$ on a set of robots $\mathcal{R}$, the pair $(\mathcal{R}, \prec)$ is a strict partially ordered robot set. 
\end{definition}

\begin{definition}[Ordered robot neighborhood]
Given a priority ordering $\prec$, for a given robot $r_n$, $H_n = \{r_m | \xi_m \prec \xi_n\}$ is the set of robots with higher priority, and $L_n = \{r_m | \xi_m \succ \xi_n\}$ is the set of robots with lower priority. The neighborhood of robot $r_n$ defined as $\mathcal{N}_n = H_n \cup L_n \cup \{r_n\}$ is strongly connected (by symmetry of communication). 
By definition, the robot neighborhood $\mathcal{N}_n$ is an ordered robot set $(\mathcal{N}_n, \prec)$.
\end{definition}

\subsection{Coordination Strategy}
Our coordination strategy is detailed in Algorithm~\ref{alg:dynamic}, and is based on two main elements, described as follows:

\textit{Computation of an ordered robot neighborhood}: 
Each robot is able to detect other robots when they come into the communication range $c$, and when they leave it. A robot $r_n$ has a priority score $\xi_n$, which it can compute independently by a function \textsc{ComputePriority} (see Section~\ref{sec:path_prospects}).
Each robot $r_n$ maintains two lists of robots currently in its range:  $H_n$ contains the robots with higher priority whilst $L_n$ contains the robots with lower priority. In a dynamic priority scheme, $r_n$ recomputes $\xi_n$ whenever a new robot comes into range. It then broadcasts this updated priority value to ensure all robots within range (i.e., in its neighborhood) have a consistent plan.

\textit{Re-planning}: Re-planning is triggered for $r_n$ in three cases: \textit{(i)} when a new robot comes into range that has a higher priority, \textit{(ii)} when an updated plan is received from a higher priority robot, or \textit{(iii)}, when a robot $r_m$ broadcasts a new priority $\xi_m$.  When $r_n$ re-plans, it calls a function \textsc{ComputeNewPlan} that takes into account the planned trajectories of robots with higher priority (in $H_n$). The robot then communicates its new plan to robots in $L_n$. %\fixit{does consensus need to be achieved. if so, how long does this take?}

\begin{proposition}
Algorithm~\ref{alg:dynamic} is deadlock-free. %leads to deadlock-free decoupled planning for the set of robots $\mathcal{R}$.
\end{proposition}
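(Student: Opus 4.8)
The plan is to reduce deadlock-freedom to the acyclicity of the priority relation $\prec$. A deadlock in this scheme corresponds to a cyclic re-planning dependency: a chain of robots $r_{i_1}, r_{i_2}, \ldots, r_{i_k}$ in which each robot must plan around the next because the next is of higher priority (i.e. $r_{i_{j+1}} \in H_{i_j}$), and the chain closes on itself ($r_{i_1} \in H_{i_k}$). Along such a cycle no robot can finalize its plan without first knowing the plan of a robot further along the cycle, and the loop never terminates. I would therefore show that the dependency induced by the lists $H_n$ that Algorithm~\ref{alg:dynamic} maintains can never contain such a cycle.

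First I would establish that the priority scores induce a strict partial order. By the Definition of priority ordering, $r_n$ is of higher priority than $r_m$ exactly when $\xi_n \prec \xi_m$; I would check that $\prec$ is irreflexive and transitive (and hence asymmetric), which is precisely the assertion, in the Definition of ordered robot set, that $(\mathcal{R}, \prec)$ is a strict partial order. Any strict partial order is acyclic, so there is no finite chain $\xi_{i_1} \prec \xi_{i_2} \prec \cdots \prec \xi_{i_k} \prec \xi_{i_1}$; the cyclic dependency above would unwind to exactly such a forbidden chain, giving $\xi_{i_1} \prec \xi_{i_1}$ and contradicting irreflexivity.

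Next I would connect this abstract ordering to the behaviour of the algorithm. Inspecting Algorithm~\ref{alg:dynamic}, the only events that trigger a re-plan for $r_n$ are driven by robots in $H_n$: a higher-priority robot entering range, a higher-priority robot broadcasting an updated plan, or a priority update that moves a robot into $H_n$. Robots in $L_n$ never force $r_n$ to re-plan. Hence the dependency graph whose edges run from each robot to the members of its $H_n$ is a sub-relation of $\prec$ restricted to the active neighborhoods, and the previous step shows it is a DAG. A finite DAG always has a source, so at every instant there is a robot with empty $H_n$ that plans freely; the remaining robots can be ordered topologically and each plans around only robots already settled. No circular wait can form, which is exactly deadlock-freedom.

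The main obstacle I anticipate lies in the decentralized, asynchronous nature of the priority computation: each $\xi_n$ is produced independently by \textsc{ComputePriority}, and priorities are recomputed and rebroadcast as neighborhoods change. The crux is to show the comparison is globally consistent and antisymmetric — that whenever $r_n$ and $r_m$ are in range they agree on the direction of $\prec$, so that $\xi_n \prec \xi_m$ and $\xi_m \prec \xi_n$ cannot both hold. This forces \textsc{ComputePriority} to return values in a totally (or consistently partially) ordered set, with ties broken by a globally unique deterministic key such as a robot identifier. I would also verify that the broadcast-on-change steps (\textsc{BroadcastPriorityIfChanged}, \textsc{BroadcastPlanIfChanged}) ensure that, once the neighborhood structure stabilizes, every robot's $H_n$ and $L_n$ reflect the same ordering, so the acyclicity argument applies to the settled configuration rather than to a transient, possibly inconsistent one.
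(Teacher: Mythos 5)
Your proposal is correct and follows essentially the same route as the paper's own proof: reduce deadlock-freedom to acyclicity of the strict partial order induced by the priority scores, with global consistency across overlapping neighborhoods guaranteed by the fact that each robot computes and broadcasts a single priority value. Your treatment is somewhat more explicit than the paper's --- in particular, you formalize a deadlock as a circular wait in the $H_n$-dependency graph and flag the tie-breaking requirement (which the paper only addresses later, in its evaluation, via ``tie-break options to guarantee strict orderings'') --- but the decomposition and the key lemma are the same.
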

\begin{proof}
Since each robot $r_n$ in $\mathcal{R}$ executes Algorithm~\ref{alg:dynamic}, the result is a collection of ordered robot neighborhoods $\mathcal{N}_n$, $\forall n$. 
If two robot neighborhoods $\mathcal{N}_n$ and $\mathcal{N}_m$ share a common robot $r_j$, then, by transitivity, there must be a partial ordering in $\mathcal{N}_n \cup \mathcal{N}_m$, since Algorithm~\ref{alg:dynamic} ensures that robot $r_j$ can only have one priority score $\xi_j$ that is broadcast. Hence, Algorithm~\ref{alg:dynamic} constructs an ordered robot set \mbox{$(\mathcal{R}, \prec)$}. Since partial orderings are acyclic, no planning deadlocks can arise.
\end{proof}

%Algorithm~\ref{alg:dynamic} is a specialization of a general purpose flowchart originally defined for homogeneous robot teams, proposed by Clark et al.\ in \cite{clark2002applying}. %Our algorithm is inspired by their flowchart; we include further details on how the priority lists are maintained, when re-planning is triggered, and what messages are sent/received between robots.

%Although Clark's method accounts for dynamic robot neighborhoods, it does not account for dynamic priorities within static neighborhoods.
%In contrast to Clark's method, our method also accounts for the case when priorities change online. 
%In Clark's approach, robot $r_n$, upon meeting a new robot, polls all robots in range for their up-to-date priorities.
%However, since it is vital that all robots have a consistent view of each others' priorities in order to avoid the case where two robots both treat the other as a lower priority robot, all the polled robots would then need to additionally broadcast their updated priorities. This increases the number of times that replanning is triggered. 
%The procedure detailed in Algorithm~\ref{alg:dynamic}, instead, triggers a re-computation of trajectory plans in. %only broadcasting newly computed trajectory plans. 

% discuss how everyone has consistent view of priority -> partial order
% \begin{proposition}
% The Algorithm leads to a partial ordering of all robots in the team (no cycles).
% \end{proposition}
% \begin{proof}
% Something.
% \end{proof}

\subsection{Individual Robot Planning}
\label{sec:individual_robot_planning}

Each robot handles the computation of its own minimum-cost trajectory from its current location to its goal location $\mathbf{g}_n$ (see function \textsc{ComputeNewPlan}). The resulting trajectory avoids the static obstacles in the environment as well as the planned paths of any higher-priority robots. 
In our implementation, all robots use the HCA* algorithm proposed in \cite{Silver2005} which applies A* search to a space-time map, and uses a reservation table to record the trajectories of other robots to be avoided. This effectively implements the function \textsc{Trim}. The complexity of TRIM is $O(|\pi_m| log|E_n|)$ assuming the usage of a fast spatial lookup structure, such as a quadtree.
%$\textsc{Trim} (G_n, \rho_n, V(\pi_m, \rho_m))$ 
% v $O(|\mathcal{V}| |\mathcal{E}|)$ in the worst case 

Our approach is general, in that any path planning algorithm that is able to avoid dynamic obstacles with known trajectories can be used; indeed it is even possible for different robots to use different algorithms so long as an implementation of the function \textsc{Trim}, which reconciles heterogeneous space-time graphs, can be embedded into the planning function.

%all robots share a grid discretization of the environment. This representation could be different from the representation used for planning e.g.\ a probabilistic roadmap could be used \cite{van2005roadmap}.

%---------------------------------------------------------
\section{Prioritization Based on Path Prospects}
\label{sec:path_prospects}

%The key idea that underpins our work is that robots with heterogeneous mobility traits have distinct path options within the same environment. For example, a robot's size or kinematic constraints may limit its navigable workspace. 
During navigation, when robots come within communication range, they enter negotiations to deconflict their path plans. To facilitate this negotiation, we implement a rule that prioritizes robots with \emph{fewer path options}. Our prioritization rule has two key components: \textbf{(1)} a method that estimates the \emph{number of options} a robot has to reach its goal, and \textbf{(2)}, a method that defines the \emph{area} within which these path options are computed.
The following paragraphs detail our approach. %a novel prioritization heuristic that uses an estimate of a robot's path prospects to its goal.

\subsection{Homology Classes of Trajectories}
% Explaining the theory and intuition before going into implementation details
To develop a method for \textbf{(1)} above, we build on theory from algebraic topology.
For a particular robot, we consider the trajectories in different \emph{homology classes} as the path prospects for the robot.
Homology classes (of trajectories) in an environment represent topologically distinct classes of trajectories (Figure~\ref{fig:path_prospects}).
Two trajectories connecting the same start and goal points on a planar domain are said to be in the same homology class if the closed loop formed by the two trajectories are \emph{null homologous}, i.e., it forms the oriented boundary of a two-dimensional obstacle-free region~\cite{planning:AURO:12}.
The homology class of a loop, in turn, can be quantified by winding numbers around the connected components of effective obstacles, with the null homologous class having zero winding number around every obstacle. Thus, in a planar domain with $z$ connected components of effective obstacles, the \emph{homology invariant} of a loop is given by a vector of integers, $[h_1, h_2,\cdots,h_z]\in\mathbb{Z}^z$, where $h_i$ is the winding number around the $i^{\text{th}}$ obstacle.

\begin{figure}[tb]
    \centering
    \includegraphics[width=0.7\columnwidth]{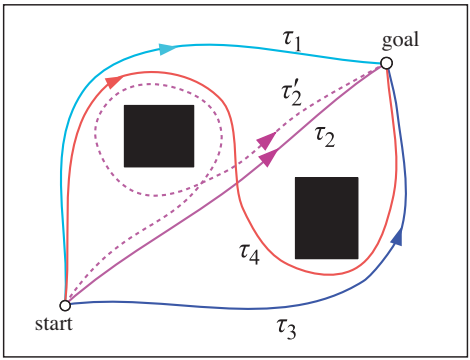}
    \caption{Homology classes of trajectories. $\tau_2$ and $\tau'_2$ are in different classes in regular homology, but map to the same class in $\mathbb{Z}_2$-coefficient homology.} \label{fig:hom-classes}
\end{figure}

However, there are infinitely many homology classes since a trajectory can loop/wind around the same obstacle multiple times, and for every different number of windings the class assigned to the loop is different.
In order to prevent the separate counting of the multi-looping homology classes, one can compute the homology invariants in the ``mod $2$'' coefficient~\cite{Persistence-Plnning:TRO:15}, $\mathbb{Z}_2 = \mathbb{Z} / 2\mathbb{Z} = \{0,1\}$. Simply put, the homology invariant in the $\mathbb{Z}_2$ coefficients become $[h_1, h_2,\cdots,h_z] \mod 2 \in\mathbb{Z}_2^z$. Doing so identifies all the even winding numbers to $0$ and all the odd winding numbers to $1$, thus preventing the creation of separate homology classes for loops that wind around obstacles multiple times (Figure~\ref{fig:hom-classes}).
$\mathbb{Z}_2^z$ is a finite set, and in fact has $2^z$ elements. Thus, the number of $\mathbb{Z}_2$ coefficient homology classes in a planar domain with $z$ connected components of effective obstacles is $2^z$, which we use in the construction of heuristics in the path prospect algorithm.

%-------------------------------------------
\subsection{Path Prospect Algorithm}
We use the number of $\mathbb{Z}_2$ coefficient homology classes in an area with $z$ effective obstacles to return an estimate of a robot $r_n$'s \emph{path prospects} $P_{n}^{(t)}$ at time $t$ in that area. Next, we develop a method for computing a relevant area (and its associated vertices), to define the component \textbf{(2)}, above.

A robot $r_n$'s \emph{path prospects} $P_{n}^{(t)}$ are an indicator of the number of distinct paths to goal $\mathbf{g}_i$ from its current location at time $t$. This can be estimated by counting the effective obstacles $\mathcal{\tilde{O}}_n$ which $r_n$ will likely come across as it moves \emph{towards} its goal $\mathbf{g}_n$ from its current position.
Specifically,  we do not wish to count any effective obstacles that lie {behind} the robot, given $\mathbf{g}_n$ and its current location. To achieve this, we define a set of \textit{forwards vertices} $F_n^{(t)} \subseteq \mathcal{V}_n$ and count only the effective obstacles whose areas intersect the area in $\mathcal{W}$ containing all vertices in $F_n^{(t)}$ and the edges that link them.

\begin{figure}[t]
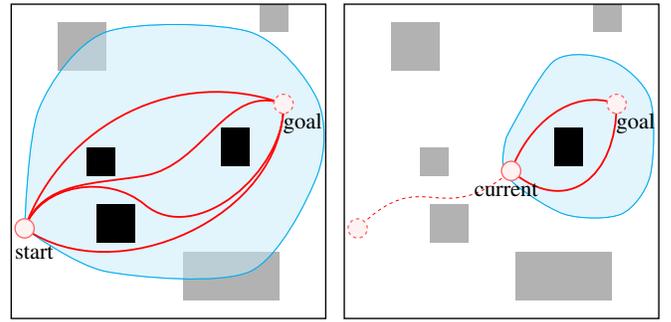

    \centering
        \begin{subfigure}{0.5\columnwidth}
            \centering
            \includegraphics[width=0.95\textwidth]{Figures/path_prospects_1.tex}
            \caption{At $t = 0$, $P_{n}^{(0)} = 2^3$.}
        % \label{fig:prospects_a}
    \end{subfigure}%
    \begin{subfigure}{0.5\columnwidth}
            \centering
            \includegraphics[width=0.95\textwidth]{Figures/path_prospects_2.tex}
            \caption{At $t = 17$, $P_{n}^{(17)} = 2^1$.}
        % \label{fig:prospects_b}
    \end{subfigure}
    \caption{Illustration of path prospects for a robot navigating to its goal, computed for two different moments in time. In (a), only 4 representative paths out of 8 are shown, for clarity. }
    \label{fig:path_prospects}
\end{figure}
\begin{figure}[t]
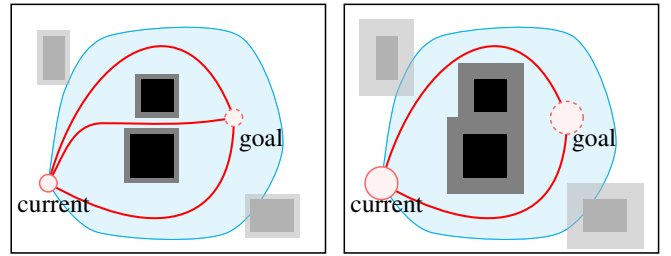

    \centering
    \begin{subfigure}{0.5\columnwidth}
            \centering
            \includegraphics[width=0.95\textwidth]{Figures/path_prospects_3.tex}
            \caption{$P_{n}^{(t)} = 2^2$.}
        % \label{fig:prospects_a}
    \end{subfigure}%
    \begin{subfigure}{0.5\columnwidth}
            \centering
            \includegraphics[width=0.95\textwidth]{Figures/path_prospects_4.tex}
            \caption{$P_{m}^{(t)} = 2^1$.}
        % \label{fig:prospects_b}
    \end{subfigure}
    \caption{Example where two robots with different sizes have different path prospects. In (b), the two central obstacles merge into a single effective obstacle. The lighter borders around each obstacle depict their inflation by the robot’s radius $\rho_n$, which is one method for computing effective obstacles.}
    \label{fig:prioritization}
\end{figure}

To define $F_n^{(t)}$, we use the notion of true distance as proposed in~\cite{Silver2005}. The true distance of a vertex $v \in \mathcal{V}_n$ is the length of the shortest satisfying path in $G_n$ from $v$ to $\mathbf{g}_n$. We define $F_n^{(t)} \subseteq \mathcal{V}_n$ to be the set of vertices that are reachable from $r_n$'s location at time $t$ by only transitioning from a vertex $v_i$ to a vertex $v_j$, if $v_j$ can still lead to paths that are shorter than the estimated longest true distance of the robot team. 
%if $e_{ij} \in \mathcal{E}_n$ and $v_j$'s true distance is less or equal to $v_i$'s true distance. 
This can be computed using a variant of Dijkstra's algorithm (see Algorithm \ref{alg:prospects}).
Note that the longest true distance can be estimated locally by broadcasting \textsc{TrueDistance}$(\mathbf{s}_n, \mathbf{g}_n)$ along with priority $\xi_n$ in Algorithm~\ref{alg:dynamic}.
Figure~\ref{fig:path_prospects} illustrates the path prospects for a robot navigating towards its goal, at two consecutive moments in time.

%-------------------------------------------
\subsection{Prioritization Heuristic}

We use the path prospect algorithm (Algorithm~\ref{alg:prospects}) to prioritize robots with conflicting paths. For robots $r_n$ and $r_m$, we define the ordering $\prec$ such that
%\begin{equation}
\begin{equation}
\label{eq:priority}
P_n^{(t)} < P_m^{(t)} \Leftrightarrow \xi_n \prec \xi_m.
\end{equation}
%\end{equation}
Priority orderings are negotiated through Algorithm~\ref{alg:dynamic}. By prioritizing robots that have fewer path prospects, we force those robots that have more options to deviate from their preferred (best) plans, and to give way to the robots that have fewer options. Figure~\ref{fig:prioritization} illustrates how different robot sizes affect the available path prospects (and hence the priority ordering). 

% \begin{definition}[Local robot prioritization.] A local robot prioritization is a partial order on $N$. 
% Given a partial order $\preccurlyeq$ on a set A, the pair $(A, \preceq)$ is called a partially ordered set or poset
% A robot $r_n$ ai is of higher priority than agent aj iff $i \prec j$.
% \end{definition}

\begin{algorithm}[tb]
\DontPrintSemicolon
\caption{Path Prospects \label{alg:prospects}}
\SetKwInOut{Input}{Input}\SetKwInOut{Output}{Output}
\Input {current position of $r_n$: $v_n$, goal location $\mathbf{g}_n$, untrimmed graph $G_n$, effective obstacles $\mathcal{\tilde{O}}_n$, estimated longest path length $T$}
\Output {path prospects $P_n^{(t)}$}
    $F_{n}^{(t)} \gets \textsc{GetForwardsVertices}(v_n, \mathbf{g}_n, G_n, T)$\;
    $A \gets \textsc{ComputeArea}(F_n^{(t)},\mathcal{E}_n)$\;
    $\kappa \gets 0$\;
    \For{$o \in \mathcal{\tilde{O}}_n$}{
        \If{$o \cap A = o$}{
            $\kappa \gets \kappa + 1$ // count this obstacle\;
        }
    }
    \Return $2^{\kappa}$
\end{algorithm}

\begin{algorithm}[tb]
\DontPrintSemicolon
\SetKwInput{fun}{Function}
\caption{Compute Set of Forwards Vertices \label{alg:forwards}}
\fun{\textsc{GetForwardsVertices}$(v, \mathbf{g}, G, T)$}{
    visited $\gets \emptyset$\;
    priority\_queue $\gets$ \{$v$\}  // prioritizes by smallest $t$\;
    \While{\upshape{priority\_queue} $\neq \emptyset$}{
        $q \gets$ \textsc{PopSmallest}(priority\_queue) with $q = \langle\mathbf{x}_q, t_q\rangle$\;
        \If{$\mathbf{x}_q \notin$ \upshape{visited}}{
            neighbours $\gets \textsc{FindNeighbours}(G, q)$\;
            \For{$n \in$ \upshape{neighbours with} $n = \langle\mathbf{x}_n, t_n\rangle$}{
                \If{$t_n + \textsc{TrueDistance}(n,\mathbf{g}) \leq T$}{
                    \textsc{Append}(priority\_queue,$n$)\;
                }
            }
            visited $\gets$ visited $\cup \{\mathbf{x}_q\}$\;
        }
    }
    \Return visited
}
\end{algorithm}

%---------------------------------------------------------
\section{Evaluation}
We implement our method in grid-worlds. This allows us to easily create valid graphs $G_n$ for all robots, implement the corresponding \textsc{Trim} function, and create a set of effective obstacles $\mathcal{\tilde{O}}_n$ for any robot $r_n$ by inflating original obstacles in $\mathcal{O}$ by $\rho_n$. We note that this dilation can be done more generally (beyond regular grid-worlds) by applying Minkowski addition.

We evaluate the performance of our method in two experiments. The first experiment (\textbf{S1}) tests the method across different types of environment. We generate six different cluttered grid-worlds, depicted in Figure~\ref{fig:maps}, of size 75$\times$75. We use a team of 10 robots of five different sizes, with two robots per size, and sizes ranging from 1 to 5. For each base environment, we generate 500 problems (random assignments), and record the performance of the solutions provided by our algorithm (with two alternative tie-break options to guarantee strict orderings), as well as by five additional benchmark algorithms (described below). We solve each problem across communication ranges $c$ that vary between 30 and 50.

The second experiment (\textbf{S2}) tests the performance of our method in a large environment with a large number of robots. We quadruple environment Maze-1 (Fig.~\ref{fig:maps} (a)) to produce a map of size 150$\times$150. We use a team of 100 robots of size ranging from 1 to 4, in equal proportion, with a communication range of 50. We generate 500 problems and record the performance of all seven algorithms (as above).

% 100 runs with 100 robots on 1 environment
%4x size: 150x150
%Robots of size 1,2,3,4
%Comms: 50

\begin{figure}
    \centering
    \begin{subfigure}{0.3\columnwidth}
            \centering
            \includegraphics[width=0.95\textwidth]{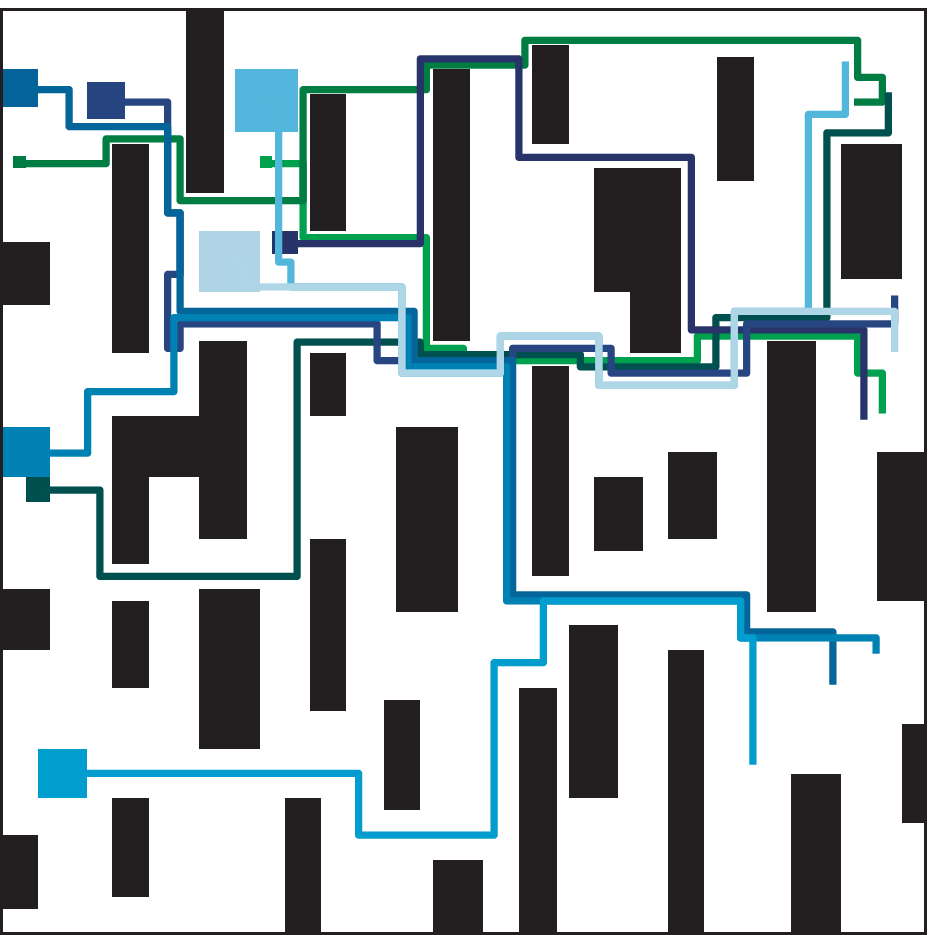}
            \caption{Maze-1}
        \label{fig:prospects_a}
    \end{subfigure}%
    \begin{subfigure}{0.3\columnwidth}
            \centering
            \includegraphics[width=0.95\textwidth]{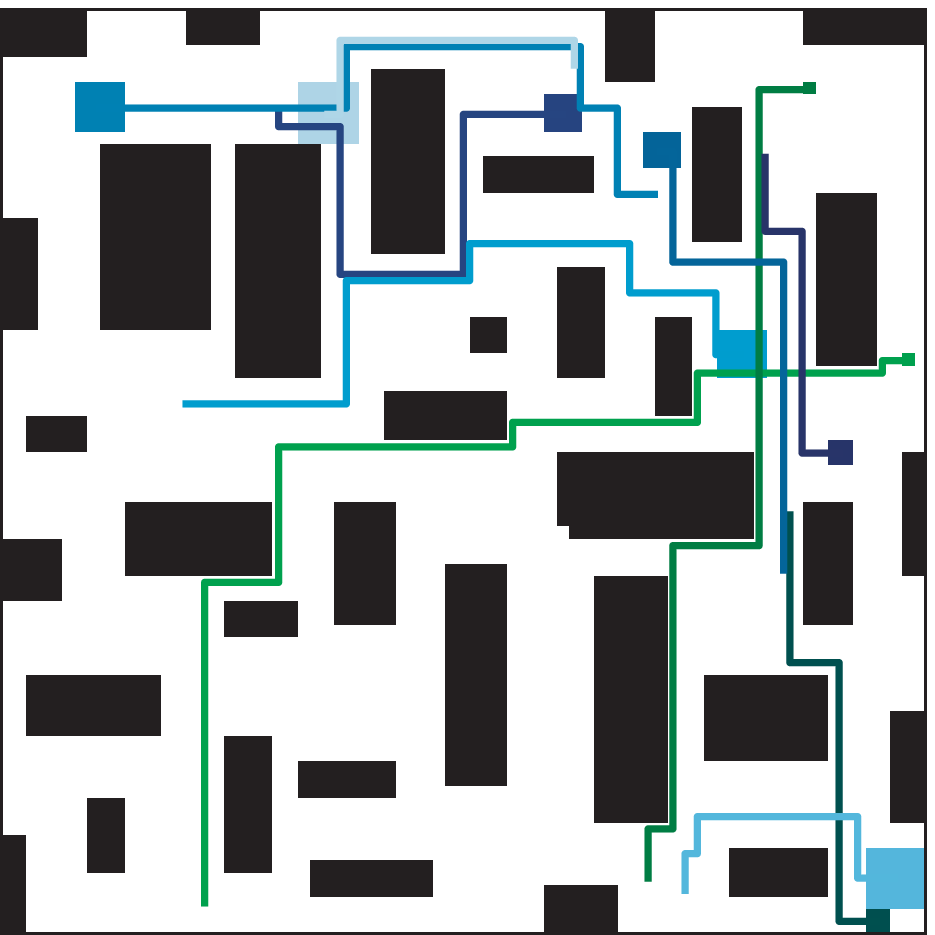}
            \caption{Maze-2}
        \label{fig:prospects_b}
    \end{subfigure}%
    \begin{subfigure}{0.3\columnwidth}
            \centering
            \includegraphics[width=0.95\textwidth]{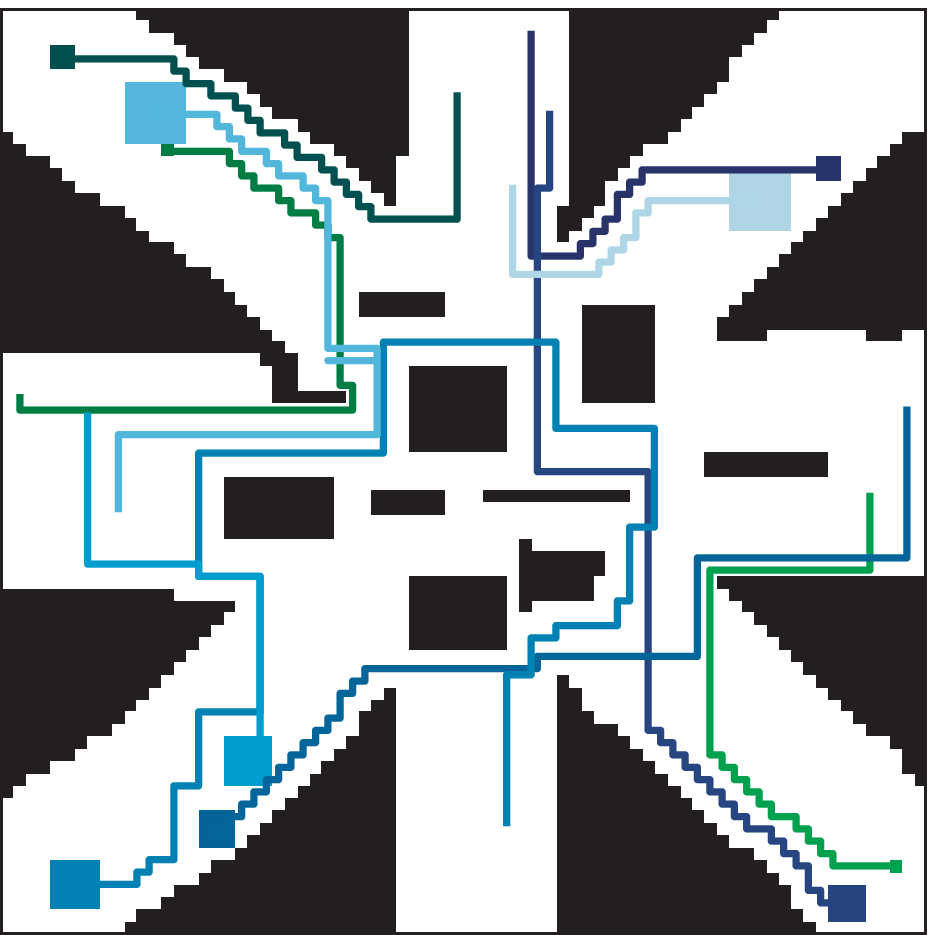}
            \caption{Crossing}
        \label{fig:prospects_c}
    \end{subfigure} 
    \\
    \begin{subfigure}{0.3\columnwidth}
            \centering
            \includegraphics[width=0.95\textwidth]{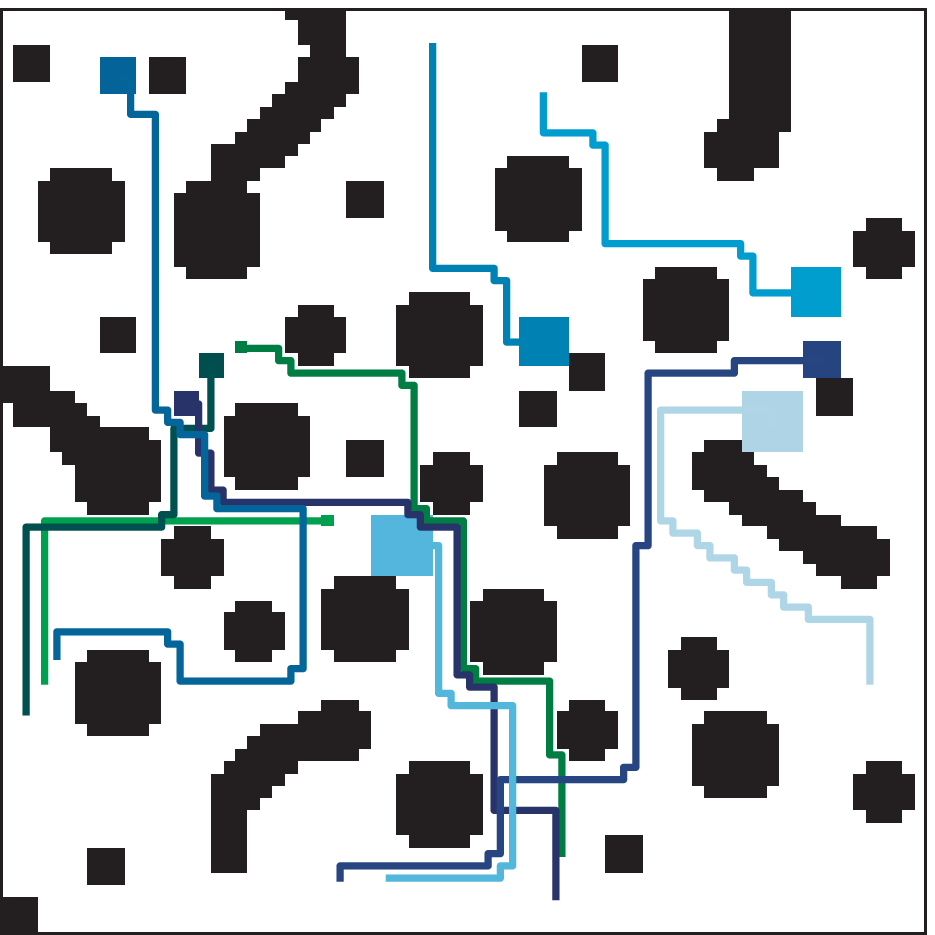}
            \caption{Clutter}
        \label{fig:prospects_d}
    \end{subfigure}
    \begin{subfigure}{0.3\columnwidth}
            \centering
            \includegraphics[width=0.95\textwidth]{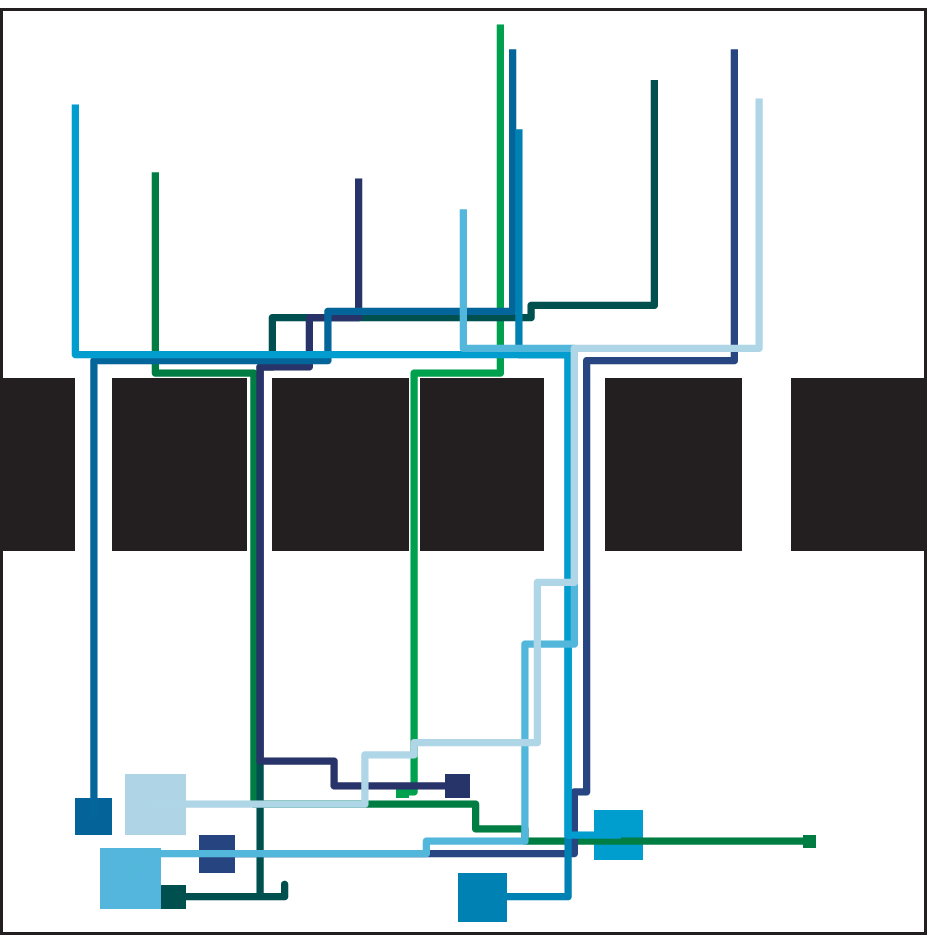}
            \caption{Corridor}
        \label{fig:prospects_e}
    \end{subfigure} 
    \begin{subfigure}{0.3\columnwidth}
            \centering
            \includegraphics[width=0.95\textwidth]{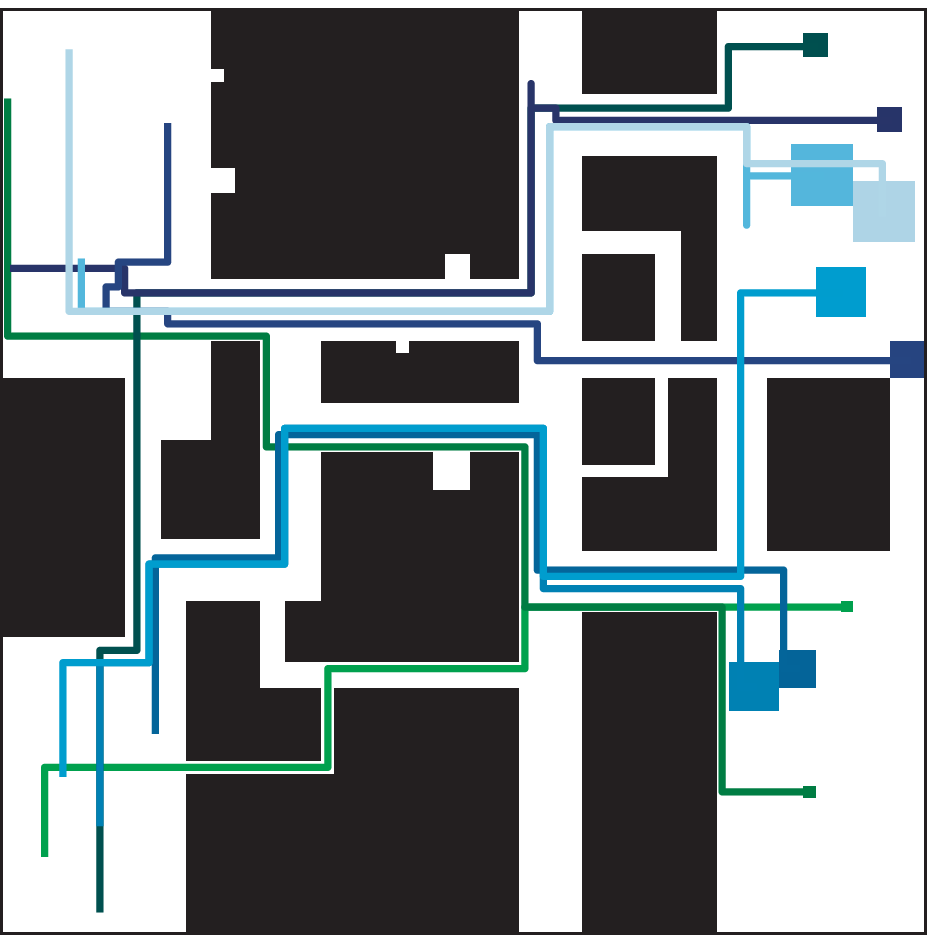}
            \caption{Tunnel}
        \label{fig:prospects_f}
    \end{subfigure} 
    \caption{Examples of path solutions (blue lines) for the six maps used in our problem sets. In each problem, 10 robots (blue squares) of five different sizes are assigned random start and goal positions.}
    \label{fig:maps}
\end{figure}

\begin{figure*}
    \centering
    \begin{subfigure}{0.32\textwidth}
            \centering
            \includegraphics[width=0.95\textwidth]{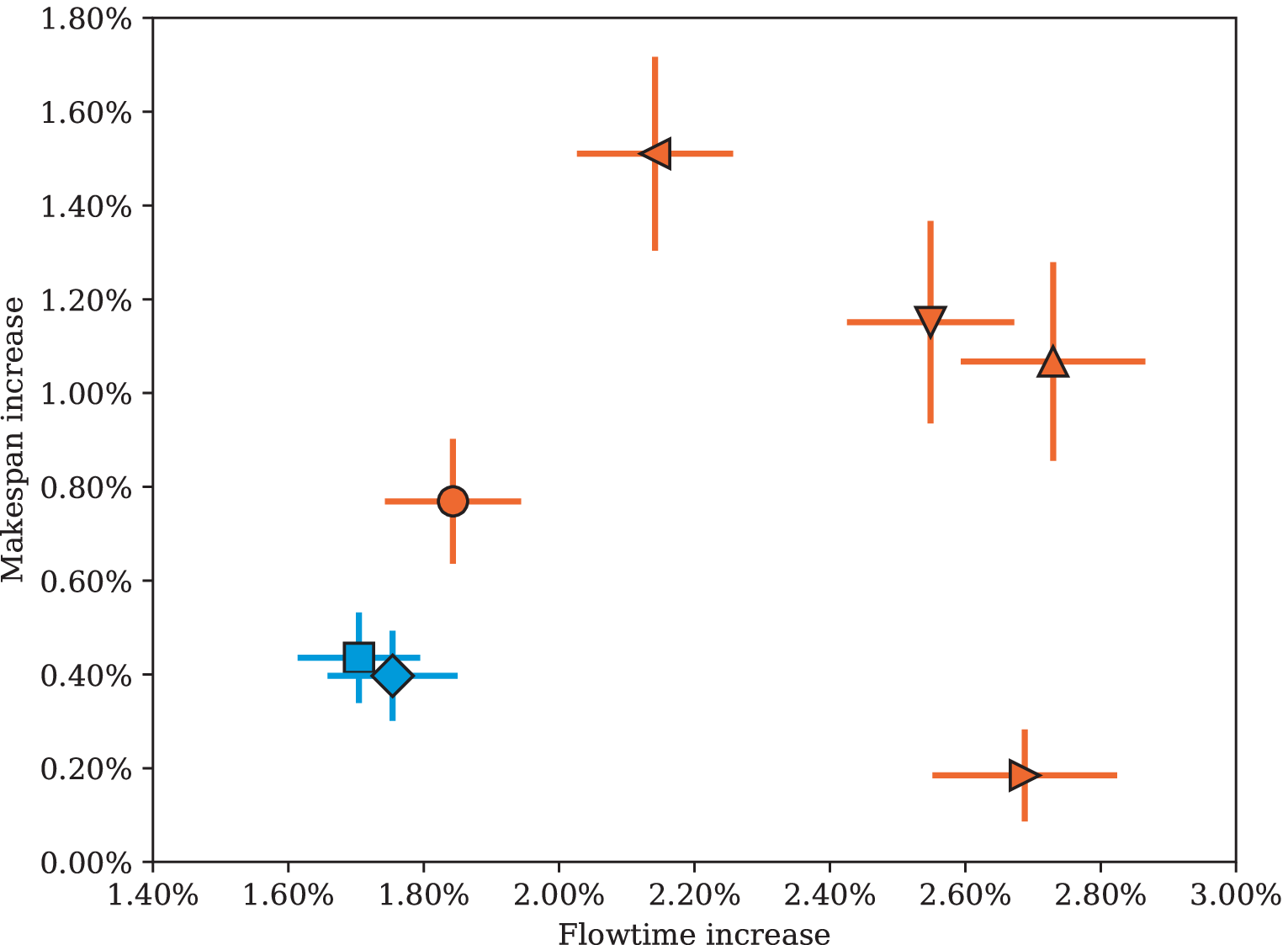}
            \caption{Clutter}
        \label{fig:p_a}
    \end{subfigure}%
    \begin{subfigure}{0.32\textwidth}
            \centering
            \includegraphics[width=0.95\textwidth]{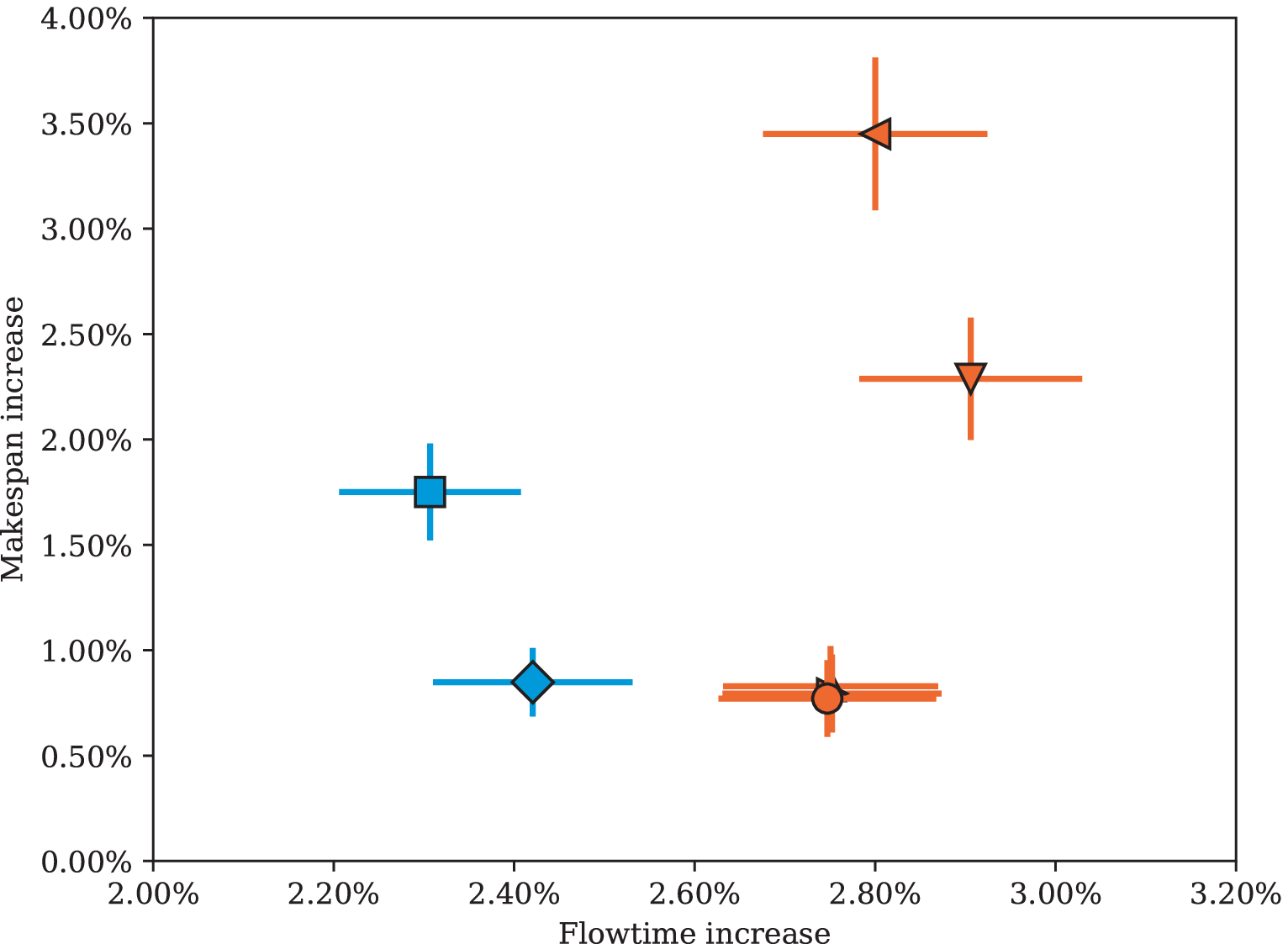}
            \caption{Corridor}
        \label{fig:p_b}
    \end{subfigure} 
    \begin{subfigure}{0.32\textwidth}
            \centering
            \includegraphics[width=0.95\textwidth]{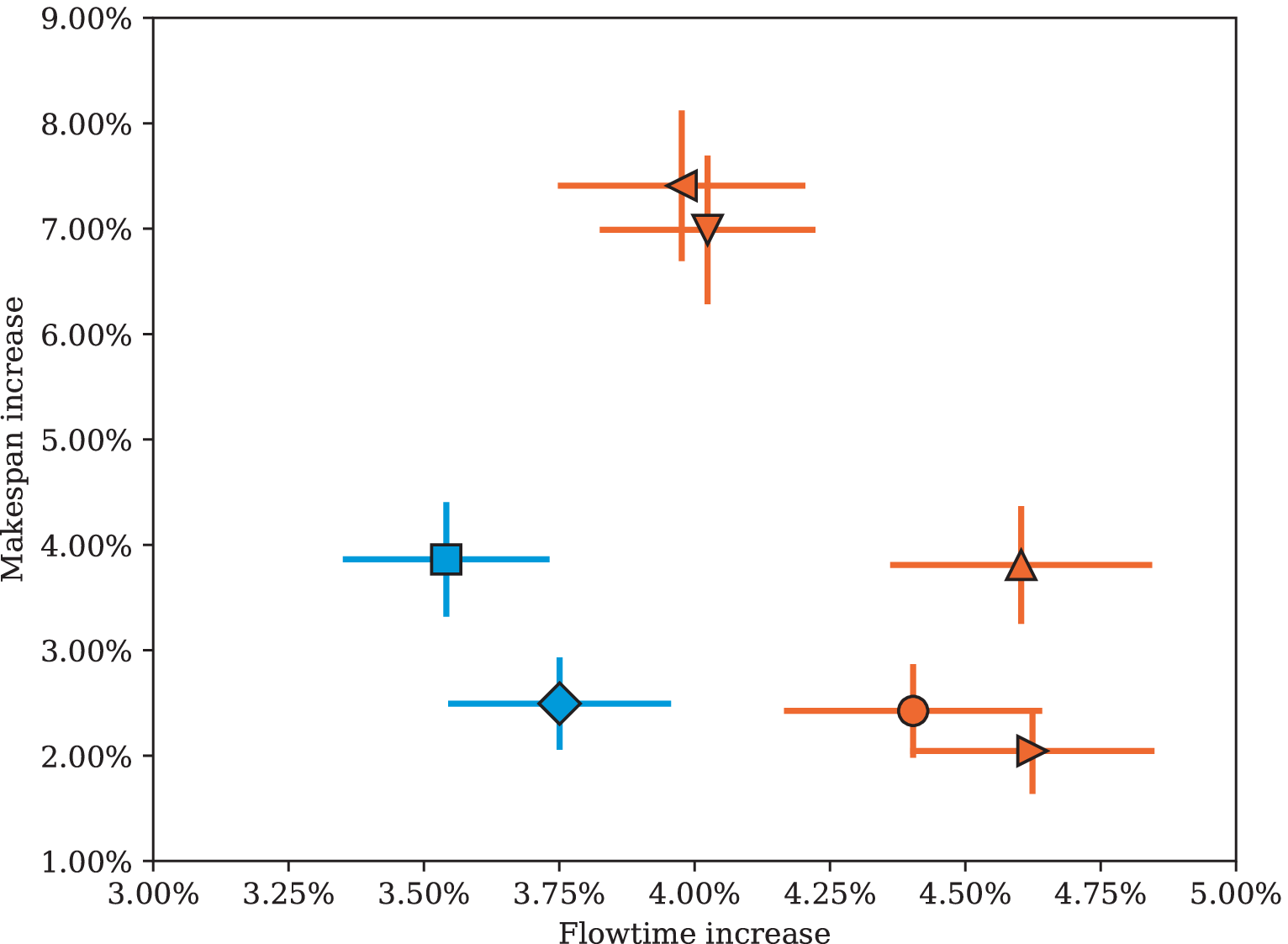}
            \caption{Crossing}
        \label{fig:p_c}
    \end{subfigure} \\
    \begin{subfigure}{0.32\textwidth}
            \centering
            \includegraphics[width=0.95\textwidth]{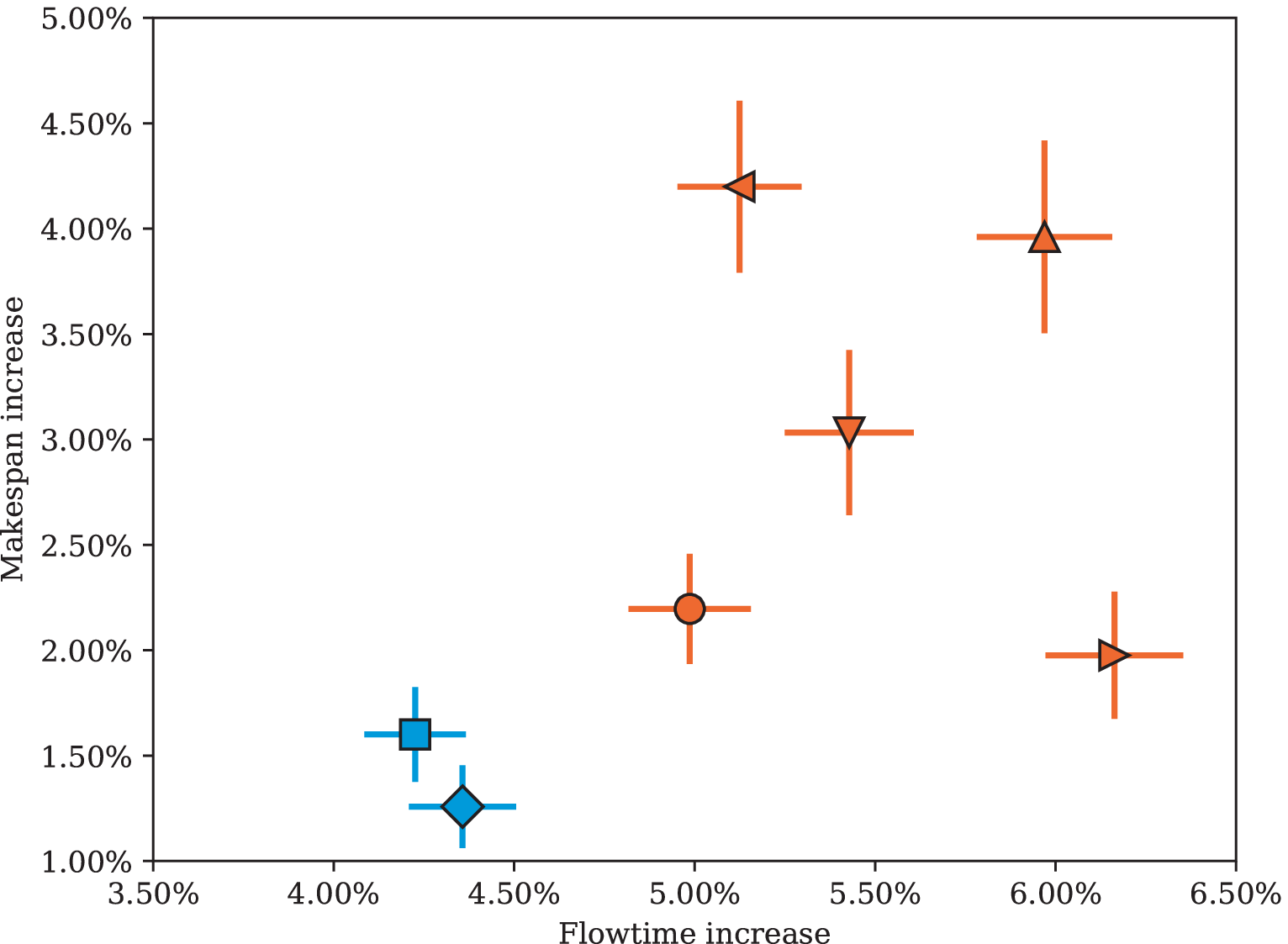}
            \caption{Maze-1}
        \label{fig:p_d}
    \end{subfigure} 
    \begin{subfigure}{0.32\textwidth}
            \centering
            \includegraphics[width=0.95\textwidth]{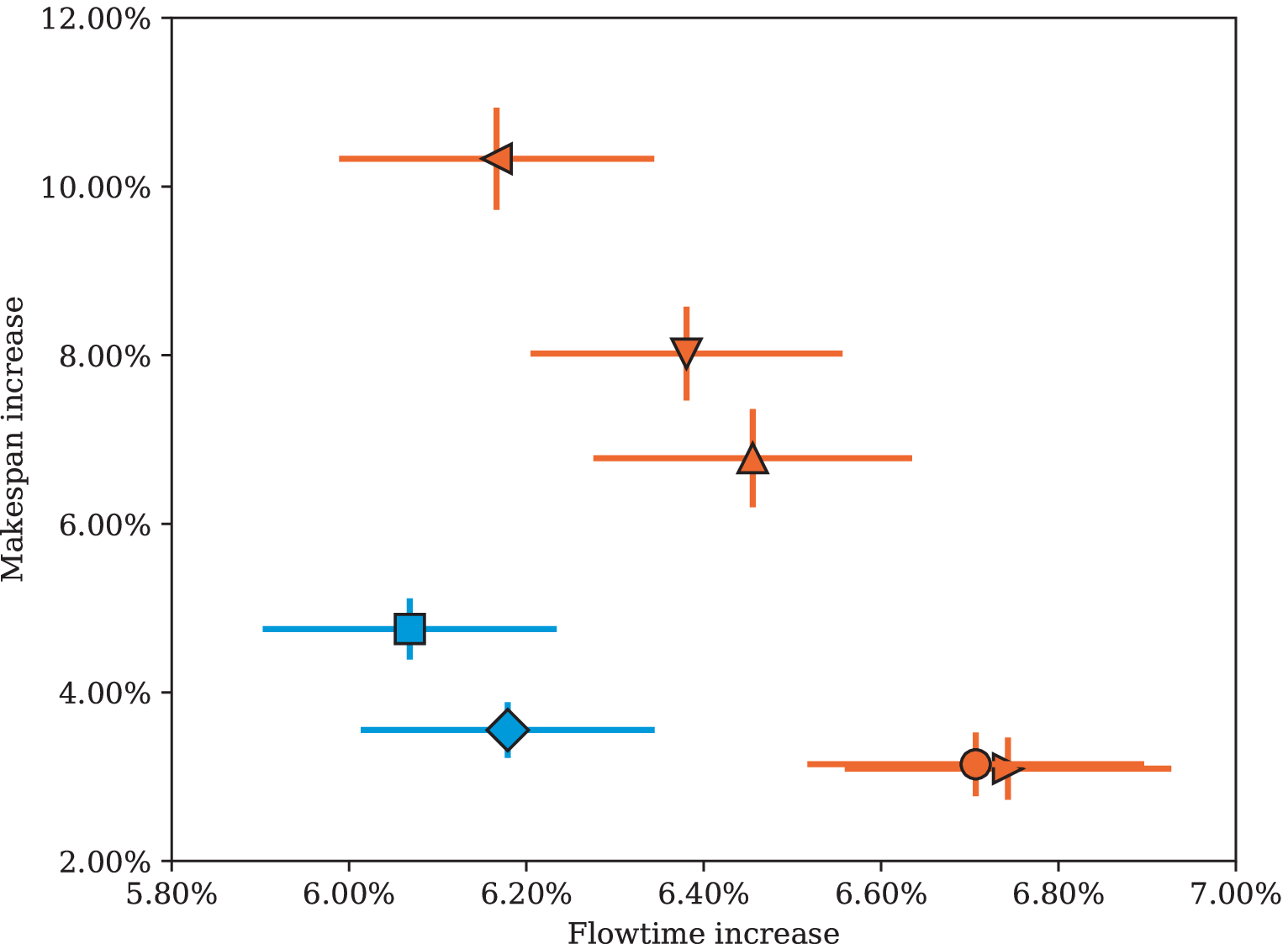}
            \caption{Tunnel}
        \label{fig:p_e}
    \end{subfigure}
    \begin{subfigure}{0.32\textwidth}
            \centering
            \includegraphics[width=0.95\textwidth]{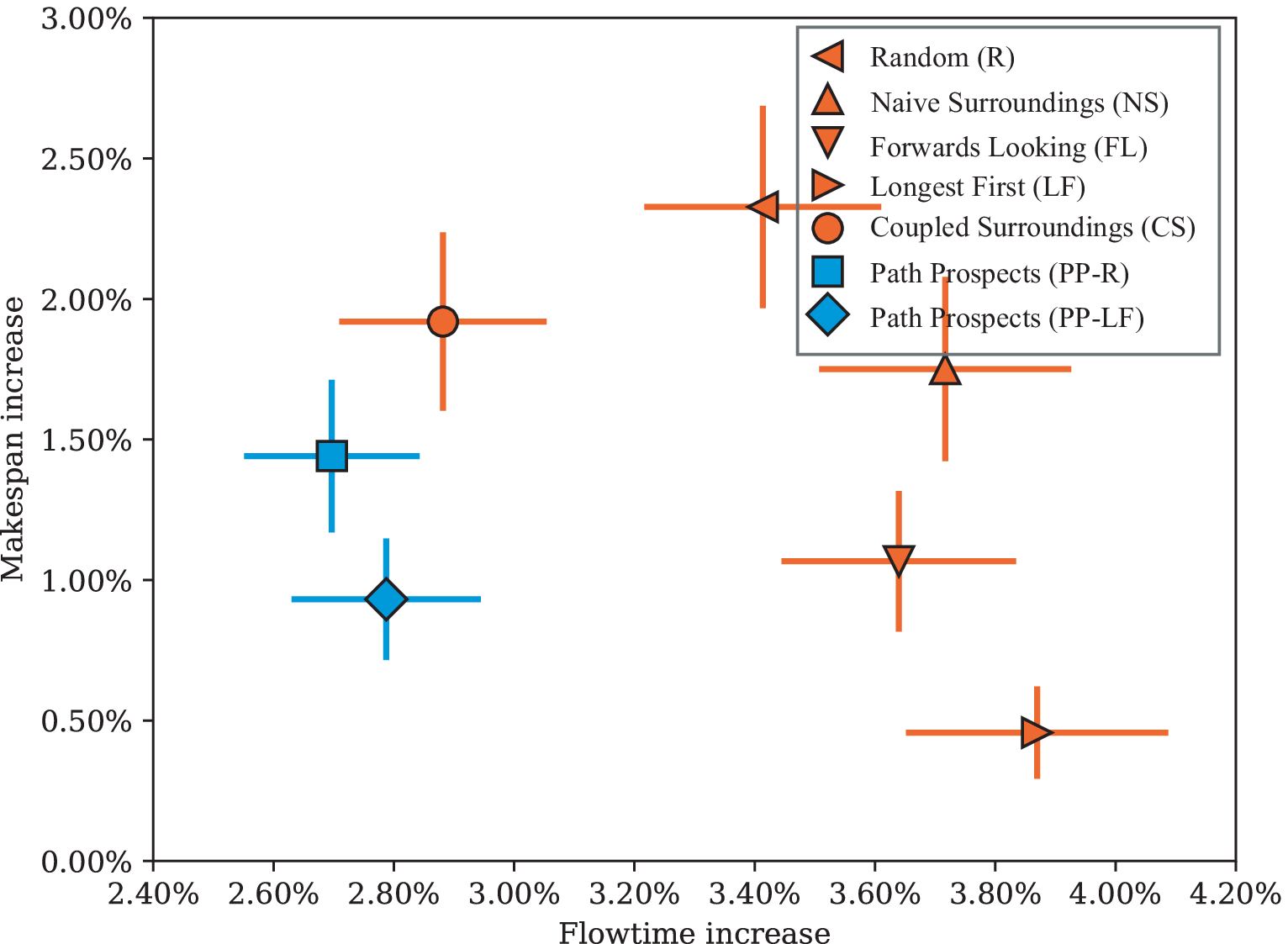}
            \caption{Maze-2}
        \label{fig:p_f}
    \end{subfigure}
    \caption{Experiment \textbf{S1}. Percentage increase over the ideal flowtime and ideal makespan, for the seven variant prioritization heuristics. We show a 95\% confidence interval. Blue nodes correspond to path prospect heuristics, red nodes represent the alternate benchmarks.}
    \label{fig:pareto_results}
\end{figure*}

%----------------------------------------
% \begin{figure*}[t]
% \centering
% \includegraphics[width=0.95\textwidth]{Figures/results_ranks.eps}
% \caption{Mean ranking of algorithms computed over all problems for a given base environment; low rankings are best. We aggregate the results over the three different communication radii. The error bars show the confidence intervals.
% \label{fig:results_ranks}}
% \end{figure*}

\subsection{Benchmarks} 
\label{sec:benchmark}

In order to test the efficacy of our prioritization method, we perform an ablation analysis.
The aim of this ablation study is to identify the efficacy of our proposed \emph{path prospects} heuristic by isolating its two key components: \emph{(i)} the spatial area within which it is applied, and \emph{(ii)} the consideration of the robot-environment coupling.
To this end, we implement seven variant schemes for online decentralized prioritization.
Four of these schemes incorporate state-of-the art heuristics, two of the schemes represent our proposed method, and the final scheme incorporates a random rule:

\textbf{(1)} \textbf{\emph{Naive Surroundings (NS):}} 
This prioritization heuristic follows the idea in~\cite{clark2002applying}, whereby robots with the most cluttered surrounding workspace are prioritized. Our implementation of this method counts the number of original obstacles in $\mathcal{O}$ within a range $z=30$ (which corresponds to the best performing range found via grid-search). This variant does not consider the coupling between robot mobility and the environment, and we term it \emph{naive}. We break ties by prioritizing robots with longest remaining paths. 
\textbf{(2)} \textbf{\emph{Coupled Surroundings (CS):}} 
This prioritization heuristic also follows~\cite{clark2002applying}, yet we adapt it to consider the coupling between robot mobility and the environment, whereby effective obstacles in $\mathcal{\tilde{O}}$ are counted (instead of original obstacles). When robot priorities are equal, we tie-break by giving a higher priority to the robot that has the longest remaining path.
\textbf{(3)} \textbf{\emph{Longest First (LF):}}
This method prioritizes the robot that has the longest remaining path to its goal, which corresponds to the heuristic used in~\cite{vandenberg:2005}. When robot priorities are equal, we tie-break by giving a random priority order.
\textbf{(4)} \textbf{\emph{Forwards Looking (FL):}}
We consider a \emph{naive} approach that disregards the coupling of robot mobility and the environment. It is naive in that it uses original obstacles in $\mathcal{O}$ instead of obstacles in $\mathcal{\tilde{O}}$ to compute the number of path options.
The number of path options is considered in the area that contains paths with a cost less than the cost of the currently longest path known, as specified by~Alg.~\ref{alg:forwards}. We tie-break this method by prioritizing robots with the longest remaining paths.
\textbf{(5)} \textbf{\emph{Path Prospects (PP-R):}}
This method implements our path prospect algorithm. We tie-break randomly.
\textbf{(6)} \textbf{\emph{Path Prospects (PP-LF):}}
This method implements our path prospect algorithm. We tie-break with longest-first.
\textbf{(7)} \textbf{\emph{Random (R):}}
Finally, we also implement a prioritization rule that randomly assigns the priority order.

%----------------------------------------
\subsection{Results}
For each run, we record the flowtime, makespan, and whether the run succeeded (i.e., all robots reached their goal locations). 
First, we evaluate the seven algorithm variants by computing two performance metrics: we consider the percent increase in makespan and flowtime, over the ideal makespan and flowtime, respectively, that assumes a collision-free world without robot interactions. 
Figure~\ref{fig:pareto_results} shows a scatter plot of these values, for each base environment in \textbf{S1}.
On all plots, our two proposed methods \textbf{\emph{PP-R}} and \textbf{\emph{PP-LF}} lie on the empirical Pareto front (i.e., lowest values over both dimensions).  %along with \textbf{\emph{Longest First}}, 
Compared to \textbf{\emph{LF}}, our method provides a valuable trade-off when flowtime is important. When comparing  \textbf{\emph{PP-R}} and \textbf{\emph{LF}} to \textbf{\emph{CS}}, the panels show that \textbf{\emph{CS}} incurs a loss of performance in makespan or flowtime performance, or both. This shows that the \emph{area} within which path options are considered is important. The panels also show that our two methods consistently outperform the naive variant, \textbf{\emph{FL}}, which uses the same area for computing path prospects (i.e., forwards vertices), but disregards the robots' mobility within this area. This demonstrates the importance of considering the coupling between the robot and its environment.

Figure~\ref{fig:success} shows the success rates for the seven algorithms. The results show that success rates increase significantly with heuristics that explicitly account for the robot-environment coupling. The highest success rates are achieved by our two methods, \textbf{\emph{PP-R}} at 95.7\% and \textbf{\emph{PP-LF}} at 94.1\%.

Figure~\ref{fig:results_largemap} shows the percent increase in makespan and flowtime for the large map used in our second experiment  (\textbf{S2}). The results corroborate the results obtained over the smaller environments; the experiment also demonstrates the applicability of our method to large numbers of agents. 
%\fixit{do equally well due to large number of obs...}

%Tables \ref{tab:win_flowtime} and \ref{tab:win_makespan} show head-to-head comparisons of flowtime and makespan, respectively. For flowtime, the method \textbf{\emph{Path Prospects (LF)}} beats all other methods with the exception of (\textbf{\emph{Path Prospects (R)}}, which differs only in the tie-break). For makespan, \textbf{\emph{Path Prospects (LF)}} beats all methods with the exception of \textbf{\emph{Longest First}} (where it looses with a small margin). This is an expected result, since \textbf{\emph{Longest First}} prioritizes robots that most directly affect the makespan metric.

\begin{figure}[tb]
    \centering
    \includegraphics[width=0.7\columnwidth]{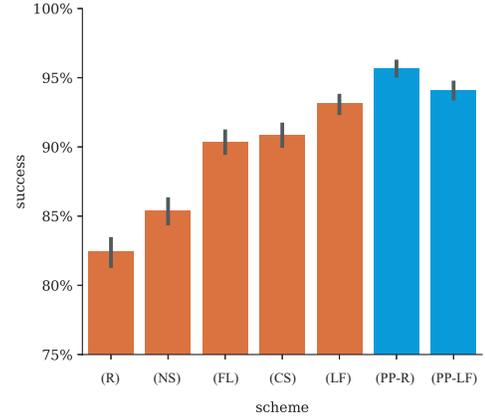}
    \caption{Success rates for the seven variant heuristics, averaged over all environments and communication radii. The errorbars represent 95\% confidence intervals. The two columns to the right (blue) correspond to our path prospect heuristics that account for the robot-environment coupling.}
    \label{fig:success}
\end{figure}

\begin{figure}
\centering
\includegraphics[width=0.35\textwidth]{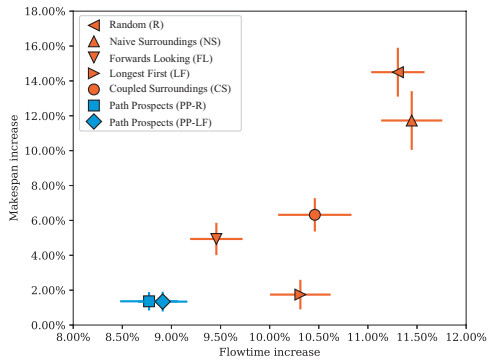}
\caption{Experiment \textbf{S2} obtained on a large map of size 150$\times$150 with 100 robots. Percentage increase over the ideal flowtime and ideal makespan, for the seven variant prioritization heuristics. We show a 95\% confidence interval. Blue nodes correspond to path prospect heuristics, red nodes represent the alternate benchmarks.
\label{fig:results_largemap}}
\end{figure}

% \begin{table}[b]
% \caption{Head-to-head comparison of flowtime: \textbf{\emph{Path Prospects (LF)}} is tested against six opponent heuristics.\label{tab:win_flowtime}}
% \begin{tabular}{l|ccc}
% {\bf Opponent} & {\bf Win rate} & {\bf Loss rate} & {\bf Draw rate}\\
% \hline
% \textbf{\emph{Random (R)}} & 56.04\% & 31.84\% & 12.12\% \\
% \textbf{\emph{Surroundings (N-LF)}} & 57.66\% & 26.01\% & 16.33\% \\
% \textbf{\emph{Path Prospects (N-LF)}} & 54.20\% & 29.92\% & 15.89\% \\
% \textbf{\emph{Longest First (R)}} & 53.97\% & 26.76\% & 19.27\% \\
% \textbf{\emph{Surroundings (LF)}} & 41.64\% & 30.34\% & 28.02\% \\
% \textbf{\emph{Path Prospects (R)}} & 16.48\% & 23.51\% & 60.01\% \\
% \end{tabular}
% \end{table}

% \begin{table}[b]
% \caption{Head-to-head comparison of makespan: \textbf{\emph{Path Prospects (LF)}} is tested against six opponent heuristics.\label{tab:win_makespan}}
% \begin{tabular}{l|ccc}
% {\bf Opponent} & {\bf Win rate} & {\bf Loss rate} & {\bf Draw rate}\\
% \hline
% \textbf{\emph{Random (R)}} & 49.47\% & 8.63\% & 41.90\% \\
% \textbf{\emph{Surroundings (N-LF)}} & 30.32\% & 12.53\% & 57.15\% \\
% \textbf{\emph{Path Prospects (N-LF)}} & 33.27\% & 12.26\% & 54.47\% \\
% \textbf{\emph{Longest First (R)}} & 13.84\% & 15.88\% & 70.28\% \\
% \textbf{\emph{Surroundings (LF)}} & 16.75\% & 13.77\% & 69.48\% \\
% \textbf{\emph{Path Prospects (R)}} & 14.47\% & 6.71\% & 78.82\% \\
% \end{tabular}
% \end{table}

%----------------------------------------
\subsection{Discussion}
%\section{Conclusion}
We presented a method for dynamic prioritized path planning for teams of robots. Our decentralized, decoupled planning algorithm provides a deadlock-free means of negotiating path plans among robots, and uses a prioritization heuristic that is based on $\mathbb{Z}_2$ coefficient homology classes, which quantifies a robot's number of available path options. 
This heuristic makes use of two key components. First, it estimates the number of path options available to a robot for it to reach its goal. Second, it defines an area within which these path options are computed. We compared our method to five alternate heuristics. Although our objectives (minimum makespan and minimum flowtime) cannot be simultaneously optimized, we showed that our method strikes the best balance, and lies on the empirical Pareto front of these considered benchmarks. Future work will consider the application of this method to a wider variety of configuration spaces (beyond grid-worlds), in  3-dimensional, continuous domains.
%\fixit{more sophisiticaiton in how the effective obstacles are generated; eg rod / snake robot will require a different method to generate configuration space. difficult comms models, beyond th eunit disc; asymmetric comms}

The use of a prioritization heuristic is efficient; this is particularly true when the technique eschews the need to evaluate all possible total priority orderings. In our presented decentralized algorithm, each robot is able to independently compute its own priority, since this is an absolute value that depends only on the robot itself and the environment it is moving in. Hence, we reduce the otherwise exponential computational complexity to a linear one (in the centralized case); the decentralized solution is distributed, and depends linearly on the number of neighboring robots. 
In this case-study, we not only expose the tight coupling between a robot's mobility and its surrounding environment, but also, we show that by explicitly considering this relationship, we are able to improve performance.

%---------------------------------------------------------

\section{Acknowledgement}
We gratefully acknowledge the support of ARL grant DCIST CRA W911NF-17-2-0181. This work incorporates results from the research project “Co-Evolving Built Environments and Mobile Autonomy for Future Transport and Mobility” funded by the Centre for Digital Built Britain, under InnovateUK grant number RG96233.
% Acknowledge CDBB 

{
\footnotesize
\bibliographystyle{abbrv}
\bibliography{Bibliography}
}

\end{document}